\documentclass{article}

\usepackage[numbers]{natbib}

\usepackage[preprint]{neurips_2026}

\usepackage[utf8]{inputenc} 
\usepackage[T1]{fontenc}    
\usepackage{hyperref}       
\usepackage{url}            
\usepackage{booktabs}       
\usepackage{amsfonts}       
\usepackage{nicefrac}       
\usepackage{microtype}      
\usepackage[dvipsnames]{xcolor}

\usepackage{algorithm}
\usepackage{algpseudocode}
\usepackage{float}
\usepackage{graphicx}
\usepackage{subcaption}
\usepackage{bm}
\usepackage{mathtools}
\usepackage{amsmath}
\usepackage{amssymb}
\usepackage{mathtools}
\usepackage{amsthm}
\usepackage{comment}

\floatstyle{plaintop}
\restylefloat{table}
\usepackage[tableposition=top]{caption}

\theoremstyle{plain}
\newtheorem{theorem}{Theorem}
\newtheorem{proposition}[theorem]{Proposition}

\newtheorem{corollary}{Corollary}
\theoremstyle{definition}

\theoremstyle{remark}

\title{Physiological Noise Augmentation \\ Improves Non-Invasive Brain-to-Speech}

\author{%
    Benjamin Ballyk\thanks{Equal contribution.} \quad
    Teyun Kwon\footnotemark[\value{footnote}] \quad
    Miran {\"O}zdogan \quad
    Oiwi Parker Jones\\
    \\
    PNPL\includegraphics[height=1.5\fontcharht\font`\B]{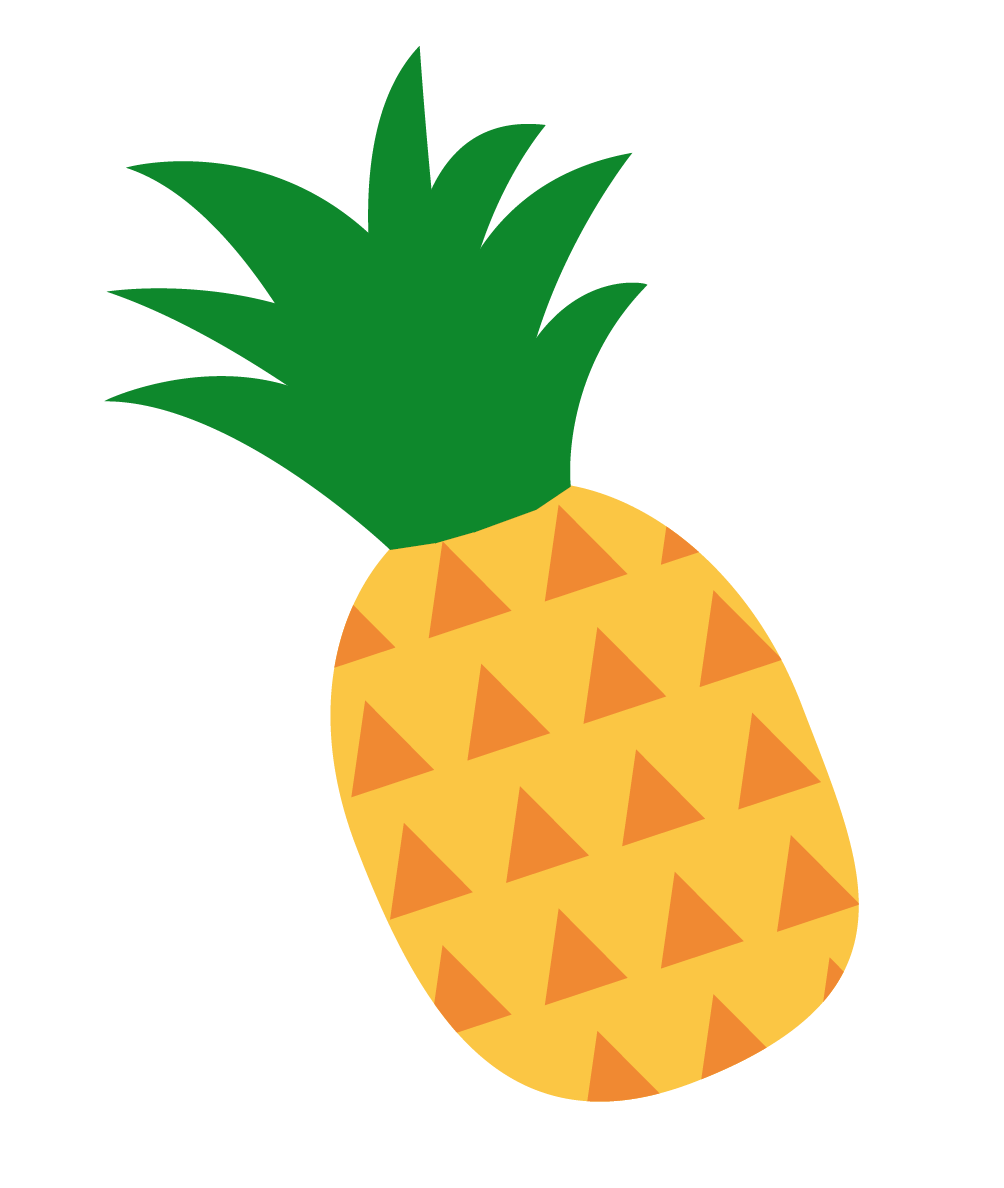}\\
    Department of Engineering Science\\
    University of Oxford\\
    \texttt{\{benjamin.ballyk, teyun.kwon\}@eng.ox.ac.uk}
}

\begin{document}

\maketitle

\begin{abstract}
\looseness=-1 Non-invasive brain-to-speech decoding aims to restore communication to patients suffering from neurodegenerative disease, without the risks of neurosurgery. Existing MEG- and EEG-based methods, while scalable, continue to suffer from high word error rates driven by relatively low signal-to-noise ratios compared to invasive recordings. We propose \textit{physiological noise augmentation} (PNA), a data augmentation method that explicitly trains decoders to become invariant to task-agnostic artifacts (e.g. ocular and cardiac activity). PNA draws inspiration from automatic speech recognition systems, where environmental noise (e.g. dogs barking, city traffic) is added to clean speech to improve robustness. Analogously, we decompose brain recordings into clean data and noise artifacts using independent component analysis (ICA), before scaling and remixing to generate biophysically realistic, label-preserving training examples. We show that PNA approximates anisotropic regularization, penalizing decoder sensitivity along artifact-dominated directions. On MegNIST, a 12k-trial imagined-digit MEG dataset, PNA with 10-trial averaging improves EEGNet decoding accuracy by $4.7$ percentage points (absolute) over training on real data alone. Our results suggest that artifact-aware augmentation and trial averaging are complementary tools for improving robustness in non-invasive speech BCIs.
\end{abstract}

\section{Introduction}

Neurodegenerative disease, stroke, and cervical spine injuries collectively affect more than 110 million patients globally, and often irreversibly impair one’s ability to articulate thoughts into speech \cite{feigin2021global, injury2019global, park2022global}. For nearly four decades, brain-to-speech research has sought to restore communication for these patients, yet this goal remains a central challenge at the intersection of neuroscience and biomedical engineering \citep{farwell1988talking, Birbaumer1999, wolpaw2002bci}. Although recent intracranial brain-computer interfaces (BCIs) have achieved increasingly accurate decoding of intended speech from cortical activity \citep{moses2021nejm, willett2023nature, card2024nejm}, these invasive systems carry risks of neurosurgical complications and long-term electrode instability \citep{leuthardt2021risk}.

Consequently, there is a growing imperative to develop safer and more accessible non-invasive alternatives based on magnetoencephalography (MEG), electroencephalography (EEG), or functional magnetic resonance imaging (fMRI) \citep{wolpaw2002bci, dascoli2025nature}. Non-invasive brain recordings, however, suffer from low signal-to-noise ratio (SNR), particularly in the context of imagined speech, which lacks overt articulatory and auditory feedback \citep{pei2011jne, martin2016inner, panachakel2021decoding, martin2014front, martin2016scirep}. For several decades, this limitation has been addressed using multi-trial averaging, in which repeated recordings of the same thought are averaged to enhance phase-locked signal relative to independent noise \citep{sutton1965p300}. However, this approach is often slow and burdensome, as users must repeat themselves several times, which limits practical applicability.

Concurrently, data collection bottlenecks have motivated augmentation strategies to expose decoding models to a broader range of realistic test-time conditions. While often modestly beneficial, existing augmentations typically operate at the input signal level rather than targeting underlying neurophysiological artifacts, leaving models exposed to structured noise that dominates imagined speech signals \citep{luo2021channel, he2021data}.

\begin{figure}[t]
    \centering
    \includegraphics[width=\linewidth]{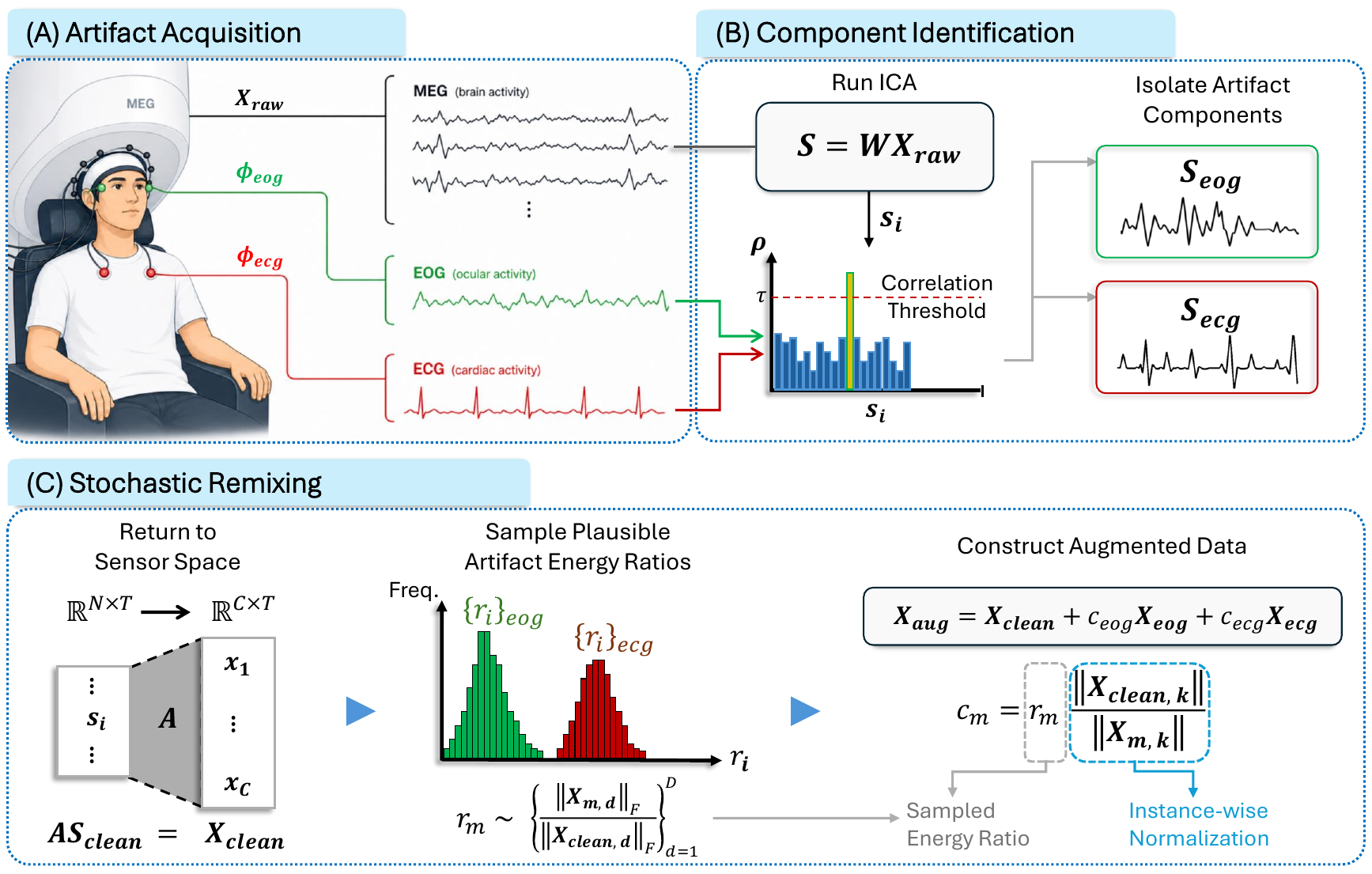}
    \caption{\textbf{Overview of physiological noise augmentation}. (A) Raw brain recordings ($\bm{X}_{\mathrm{raw}}$) and reference channels (e.g. ocular ${\color{Green}{\boldsymbol{{\phi_{\mathrm{eog}}}}}}$ and cardiac ${\color{red}{\boldsymbol{{\phi_{\mathrm{ecg}}}}}}$) are collected. (B) Independent component analysis (ICA) decomposes $\bm{X}_{\mathrm{raw}}$ into independent components, which are correlated with references to identify artifact components. (C) Artifact components are projected back to sensor space, stochastically scaled using their empirical amplitudes, and remixed to generate augmented samples.}
    \label{fig: summaryfig}
    \vspace{-1em}
\end{figure}

We address this gap by introducing \textit{physiological noise augmentation} (PNA), a data augmentation framework that isolates artifact-related independent components (e.g., ocular and cardiac activity) and re-injects remixed variants to generate physiologically realistic, label-preserving training examples. By exposing the decoder to a wider range of artifact realizations, PNA encourages invariance to nuisance structure and promotes reliance on task-relevant signals. Combined with multi-trial averaging to reduce residual variability, this approach reduces the number of repetitions required to achieve strong decoding performance.

\paragraph{Contributions.} We introduce PNA, an augmentation framework for noninvasive brain-to-speech that promotes invariance to task-agnostic artifacts by remixing artifact-related independent components. PNA generates realistic sensor-space perturbations by scaling artifacts according to the empirical distribution of energy ratios. We provide theoretical support for PNA by showing that, under multi-trial averaging, the method approximates anisotropic Jacobian regularization that penalizes the decoder's sensitivity to directions aligned with tracked artifacts.  Empirically, PNA substantially improves performance on the MegNIST imagined speech dataset, increasing decoding accuracy by $4.7\%$ (absolute) over real-data baselines and reaching $77.6\%$ decoding accuracy with EEGNet.

\section{Related Work}
\label{app: related_work}

We explore three threads of related literature that define the technical setting of this paper: non-invasive neural decoding, artifact handling in M/EEG, and data augmentation for neural time series.

\textbf{Non-Invasive Speech Decoding.} Recent progress into non-invasive decoding has focused predominantly on the task of \textit{perceived} speech, and has been driven largely by MEG data scaling and self-supervised learning \cite{jiang2024large}. \cite{defossez2023nature} apply contrastive learning on aligned embeddings of audio and neural responses, \cite{dascoli2025nature} scale across subjects, devices, and languages for transferable word-level retrieval, and \cite{jayalath2025unlocking} integrate language-model rescoring to push toward open-vocabulary brain-to-text. These methods exploit a structural advantage of perceived speech: the acoustic stimulus provides a higher SNR, largely time-locked target that anchors learning \citep{anumanchipalli2019speech}. In contrast, imagined speech, typically provides weaker SNRs and less predictable temporal alignment \citep{proix2022imagined}. We study this harder setting via single-subject MEG classification of imagined digits and address the noise problem directly through the artifact-aware augmentation.

\textbf{Artifact Handling in M/EEG.} Physiological artifacts that appear in MEG and EEG signals, such as ocular, muscular or cardiac activity, are typically treated as nuisance signals to be suppressed. The dominant pipeline for artifact suppression identifies and removes physiological artifacts via independent component analysis (ICA) \citep{bellsejnowski1995infomaxica, jung2000removing, vigario2002independent}, with recent work automating component selection via reference-channel correlation, topographies, or learned classifiers \citep{pion2019iclabel, winkler2011automatic}. PNA inverts this convention: rather than discarding artifact components, we use the same ICA decomposition to generate physiologically realistic training perturbations, exposing the decoder to the artifact distribution it will encounter at test time.

\textbf{Data Augmentation for Neural Decoding.} Augmenting clean inputs with realistic nuisances has a long history in automatic speech recognition (ASR), where noise corpora \citep{snyder2015musan} and reverberation \citep{ko2017study} are routinely added to clean speech to improve test-time robustness. Neural signal augmentations to date have imported the input-level transformations (e.g. time-frequency warping \cite{wang2025tft}, SpecAugment-style masking \cite{dezuazo2025megconformer,park2019specaugment}),  without the corresponding ASR principle of drawing perturbations from the empirical noise distribution of the target domain. PNA closes this gap: by sampling artifact realizations directly from auxiliary EOG/ECG references and the decomposed signal, it produces augmentations that match the empirical artifact statistics of the recording session.

\section{Method}
\label{sec:method}

We model brain-to-speech decoding as a multiclass classification problem over a fixed vocabulary, $\mathcal{V}$, of size $V$. Let $\mathbf{X} = [\mathbf{x}_{1},\ldots,\mathbf{x}_{T}] \in \mathbb{R}^{C\times T}$ represent a spatiotemporal neural recording, where $C$ is the number of channels, $T$ is the number of time samples per word trial, and $\mathbf{x}_{t}\in\mathbb{R}^{C}$ denotes the sensor measurements at time $t$. Our goal is to learn a decoder $f_{\theta}: \mathbb{R}^{C \times T} \rightarrow \Delta^{V-1}$ that maps the input signal to a vector of word probabilities in the $(V-1)$-simplex, such that the $v$-th element, $f_{\theta}(\mathbf{X})_v$, denotes the predicted probability of word $v \in \mathcal{V}$.

In practice, the observed recording is an additive mixture $\mathbf{X} = \mathbf{X}_{\text{task}} + \mathbf{X}_{\text{artifact}}$, where $\mathbf{X}_{\text{task}}$ represents task-relevant neural activity and $\mathbf{X}_{\text{artifact}} \in \mathcal{A}$ represents task-irrelevant physiological artifacts (e.g. cardiac or ocular activity), residing in an artifact subspace, $\mathcal{A} \subset \mathbb{R}^{C\times T}$. To ensure the decoder's predictions are driven exclusively by neural intent, we seek a model that is invariant to the artifact subspace, satisfying:
\begin{align}
    f_{\theta}(\mathbf{X}_{\mathrm{task}} + \mathbf{X}_{\mathrm{artifact}}) = f_{\theta}(\mathbf{X}_{\mathrm{task}}),  \quad \quad \forall \ \mathbf{X}_{\mathrm{task}} \in \mathbb{R}^{C\times T}, \ \mathbf{X}_{\mathrm{artifact}} \in \mathcal{A}
    \label{eqn: decoder_invariance}
\end{align}
to ensure that the predicted distribution over $\mathcal{V}$ depends only on task-related neural signals.

\subsection{Physiological Noise Augmentation}
\label{sec:physiological-noise-augmentation}

PNA seeks to enforce Equation \eqref{eqn: decoder_invariance} in three steps (Figure \ref{fig: summaryfig}): (i) recording nuisance reference channels during data acquisition, (ii) identifying and removing artifact-related ICA components, and (iii) re-injecting scaled artifact projections into the cleaned data to generate physiologically plausible synthetic samples.

\textbf{(i) Measuring artifacts.} During recording experiments spanning $T_{\mathrm{total}}$ time steps, we collect reference waveforms for task-agnostic artifacts.\footnote{Recording experiments are generally divided into several sessions, with independent component analysis repeated for each session to account for changes in sensor placement.} In MegNIST data used for our experiments, electrooculography (EOG) and electrocardiography (ECG) are recorded to capture ocular and cardiac activity, respectively (Figure \ref{fig: summaryfig}A). Let $\bm{\phi}_p \in \mathbb{R}^{T_{\mathrm{total}}}$ denote the reference time series for artifact $p \in \mathcal{P}$.

\textbf{(ii) Removing artifact-correlated sources.} PNA assumes that sensor recordings may be separated into statistically independent source components using \textit{independent component analysis} (ICA) \citep{bellsejnowski1995infomaxica}, which we overview in Appendix \ref{app: ICA}. We fit FastICA \cite{hyvarinen1999fastica} on recordings using MNE-Python \cite{gramfort2013meg}, yielding source estimates $\widehat{\mathbf{S}}$.

Artifact-related components are identified by their correlation with reference channels. For each tracked artifact type $p \in \mathcal{P}$, we define
\begin{align*}
\mathcal{S}_{p} = \left\{ i \in \{1, \ldots, N\} \;:\; \left|\rho\!\left(\hat{\mathbf{s}}_{i},\, \bm{\phi}_{p}\right)\right| \ge \tau_{p} \right\}
\end{align*}
as the set of associated nuisance component indices. Here, $\hat{\mathbf{s}}_{i} \in \mathbb{R}^{T}$ is the time-series of the $i$-th ICA component (the $i$-th row of $\widehat{\mathbf{S}}$), $\bm{\phi}_{p}$ denotes the reference time-series for artifact $p$, and $\rho(\cdot, \cdot)$ is Pearson correlation computed over $T_{\mathrm{total}}$ time samples. $\tau_{p}$ are selected correlation thresholds.

We project each artifact's components back to sensor space as $\mathbf{X}_{p} = \mathbf{A}_{:,\mathcal{S}_p}\widehat{\mathbf{S}}_{\mathcal{S}_p,:}$, and recover the cleaned recording by subtraction, $\mathbf{X}_{\mathrm{clean}} = \mathbf{X}_{\mathrm{raw}} - \sum_{p \in \mathcal{P}} \mathbf{X}_p$, following the standard ICA cleaning convention used in MNE-Python. Figure \ref{fig:ICA-art} in Appendix \ref{apx: data_vis} shows a topographical and waveform view of extracted ICA components using EOG (ocular) and ECG (cardiac) reference signals, alongside ICA component correlations.

\begin{figure*}[t]
    \centering
    \includegraphics[width=\linewidth]{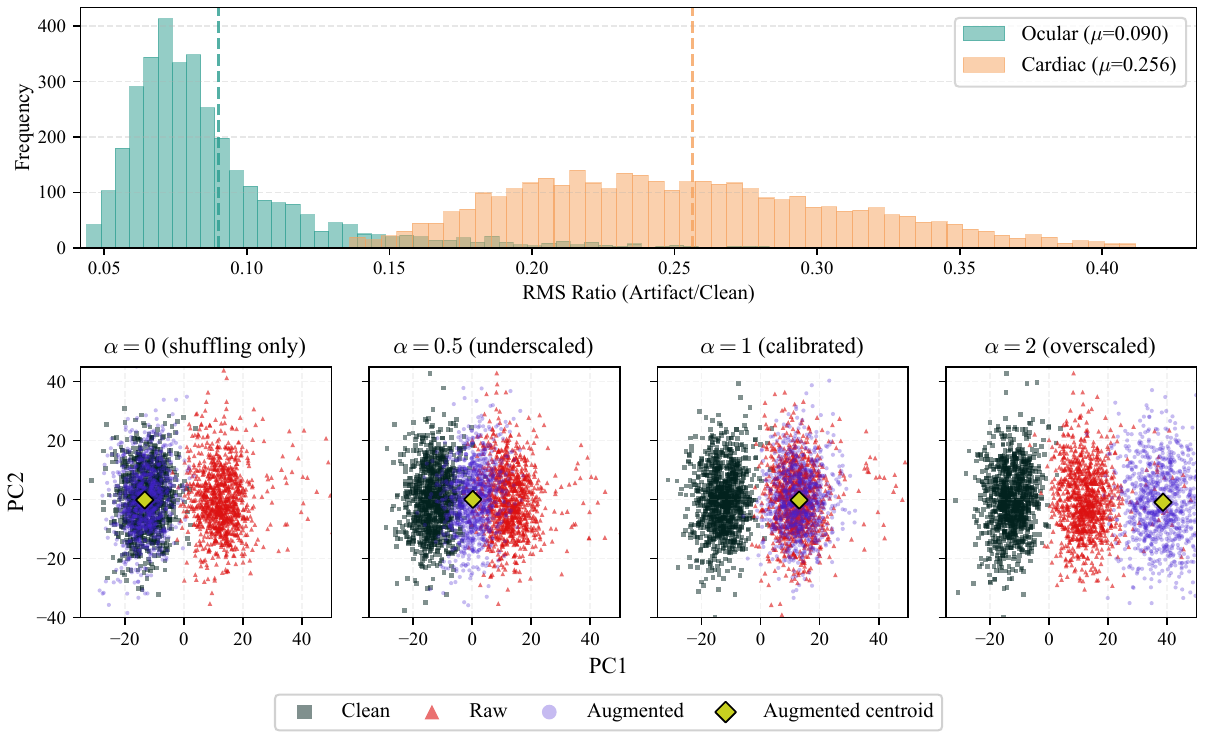}
    \caption{\looseness=-1 \textbf{Stochastic amplitude sampling calibrates PNA-augmented MEG embeddings to the distribution of raw data.} \textbf{(Top)} Empirical artifact-to-clean amplitude ratio distributions (zeros removed). \textbf{(Bottom)} PCA fit to 15-trial-averaged raw (red) and clean (black) embeddings of imagined digits; augmented samples (blue, gold centroid) are projected at varying scalings $\alpha$. Augmented embeddings interpolate between clusters for $\alpha \in [0,1]$, match Eq.~\eqref{eq:ecg} at $\alpha=1$, and shift off the biological support for $\alpha>1$. Pointwise misalignment at $\alpha=0$ reflects random indexing of artifact components.}
    \label{fig:pca-tsne}
\end{figure*}

\textbf{(iii) Augmenting with physiological noise.} We generate augmented trials by remixing scaled artifact reconstructions back into the cleaned signal,
\begin{align}
\mathbf{X}_{\mathrm{aug}} = \mathbf{X}_{\mathrm{clean}} + \sum_{p \in \mathcal{P}} c_p \mathbf{X}_p,
\label{eq: aug_eqn_general}
\end{align}
where each $c_p$ is a stochastically sampled scaling coefficient chosen to span the empirical range of signal-to-artifact ratios.

Let $\|\cdot\|_F$ denote the Frobenius norm over sensors (of the same type) and time, and let $\mathbf{X}_{p,d}$ and $\mathbf{X}_{\mathrm{clean},d}$ denote the reconstructions for trial $d \in \{1, \ldots, D\}$. We define the empirical artifact-to-clean amplitude ratios
\begin{align*}
r_{p,d} = \frac{\|\mathbf{X}_{p,d}\|_F}{\|\mathbf{X}_{\mathrm{clean},d}\|_F + \varepsilon},
\end{align*}
with $\varepsilon > 0$, a small numerical-stability constant. Figure \ref{fig:pca-tsne} (Top) shows the resulting distributions for EOG and ECG; ECG components carry roughly twice the relative energy of EOG.

During each forward pass, we sample a donor trial index $j \sim \mathrm{Uniform}\{1, \ldots, D\}$ and set
\begin{align}
c_{p} = \alpha_p \cdot r_{p,j} \cdot \frac{\|\mathbf{X}_{\mathrm{clean}}\|_F}{\|\mathbf{X}_{p,j}\|_F + \varepsilon}, \qquad \forall\, p \in \mathcal{P},
\label{eq:ecg}
\end{align}
which rescales the donor artifact $\mathbf{X}_{p,j}$ so its amplitude relative to $\mathbf{X}_{\mathrm{clean}}$ matches the sampled ratio $r_{p,j}$.For generality, a fixed per-artifact scaling parameter, $\alpha_p$, enables fine-grained control over the expected artifact magnitude. In our experiments, however, we set $\alpha_p = 1$ for all $p$ to maximize the fidelity of the augmented data, as illustrated in Figure \ref{fig:pca-tsne} (bottom). Augmented samples therefore span the empirical artifact-to-signal distribution, encouraging the decoder toward invariance under variable contamination at test time. We provide pseudocode in Algorithm \ref{alg:physio_aug_generalized}.

\begin{algorithm}[h]
\caption{Physiological Noise Augmentation}
\label{alg:physio_aug_generalized}
\begin{algorithmic}
\Require Training recording $\mathbf{X}_{\text{raw}}$ partitioned into $D$ trials $\{\mathbf{X}_d\}_{d=1}^D$; references $\{\bm{\phi}_p\}_{p \in \mathcal{P}}$; thresholds $\{\tau_p\}$; per-artifact scales $\{\alpha_p\}$; stability constant $\varepsilon$
\vspace{0.6em}
\State \textbf{// Stage 1: Offline preprocessing (run once)}
\State $\mathbf{W}, \mathbf{A} \gets \text{FastICA}(\mathbf{X}_{\text{raw}})$;\; $\widehat{\mathbf{S}} \gets \mathbf{W}\mathbf{X}_{\text{raw}}$ \Comment{ICA decomposition}
\For{$p \in \mathcal{P}$} \Comment{Assume $\mathcal{S}_p$ disjoint across $p$}
    \State $\mathcal{S}_p \gets \{ i : |\rho(\hat{\mathbf{s}}_i, \bm{\phi}_p)| \ge \tau_p \}$ \Comment{Artifact-correlated components}
    \State $\mathbf{X}_p \gets \mathbf{A}_{:,\mathcal{S}_p}\widehat{\mathbf{S}}_{\mathcal{S}_p,:}$ \Comment{Project artifact $p$ to sensor space}
\EndFor
\State $\mathbf{X}_{\text{clean}} \gets \mathbf{X}_{\text{raw}} - \sum_{p \in \mathcal{P}} \mathbf{X}_p$ \Comment{Cleaned recordings}
\For{$d = 1 \dots D,\; p \in \mathcal{P}$}
    \State $r_{p,d} \gets \|\mathbf{X}_{p,d}\|_F \,/\, (\|\mathbf{X}_{\text{clean},d}\|_F + \varepsilon)$ \Comment{Empirical artifact-to-clean ratio}
\EndFor
\vspace{0.6em}
\State \textbf{// Stage 2: Online augmentation (called per forward pass)}
\Function{Augment}{$\mathbf{X}_{\text{clean},d}$}
    \State Sample $j \sim \text{Uniform}\{1, \dots, D\}$ \Comment{Random artifact donor trial}
    \For{$p \in \mathcal{P}$}
        \State $c_p \gets \alpha_p \cdot r_{p,j} \cdot \|\mathbf{X}_{\text{clean},d}\|_F \,/\, (\|\mathbf{X}_{p,j}\|_F + \varepsilon)$ \Comment{Calibrate scaling}
    \EndFor
    \State \Return $\mathbf{X}_{\text{clean},d} + \sum_{p \in \mathcal{P}} c_p \mathbf{X}_{p,j}$
\EndFunction
\end{algorithmic}
\end{algorithm}

\looseness=-1 Thus far, we have considered onlythe effect of \textit{tracked} artifacts on decoding. Optionally, after applying PNA, we further apply \textit{multi-trial averaging} to suppress residual untracked artifacts that are not phase-locked to internal speech onset. Multi-trial averaging has long been used to improve SNR in noninvasive brain-to-text systems \cite{farwell1988talking}; a brief overview is provided in Appendix \ref{app: averaging}. 

Figure \ref{fig:tsne-class-clustering} presents t-SNE embeddings \cite{van2008visualizing} of MegNIST data before and after averaging with resampling. Without averaging, t-SNE fails to identify well-separated local clusters corresponding to imagined digit classes. In contrast, clear class-specific clusters emerge after 15-trial averaging. We repeat this analysis in Figure \ref{fig:UMAP_classes} of Appendix \ref{apx: data_vis} using UMAP \cite{mcinnes2018umap}, which provides a more faithful representation of global inter-class structure.

\begin{figure}[t]
    \centering
    \includegraphics[width=\linewidth]{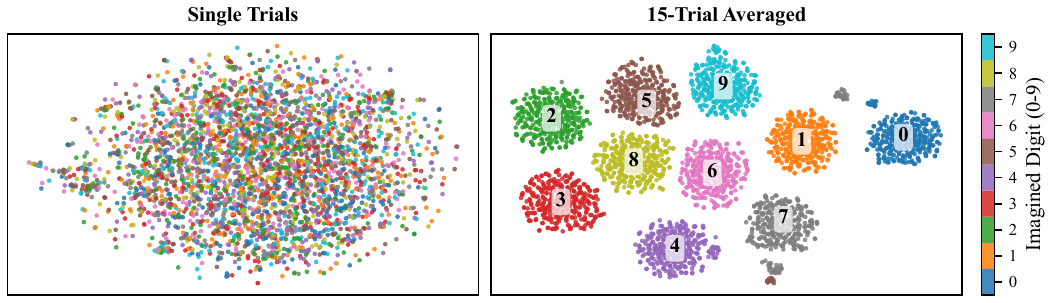}
    \caption{\textbf{t-SNE embeddings of single-trial (left) and 15-trial averaged (right) intra-patient MEG recordings of imagined digits (0--9)}. Averaging is performed with resampling to preserve dataset cardinality; t-SNE uses $\text{perplexity} = 30$.}
    \label{fig:tsne-class-clustering}
\end{figure}   

\subsection{Theoretical Motivation}
\label{sec:theoretical-motivation}

Combining PNA's artifact injection (Section 3.1 (iii)) with $K$-trial averaging yields training inputs of the form $\bar{\mathbf{x}} + \delta$, where $\bar{\mathbf{x}}$ is the trial-averaged signal and $\delta = \sum_{p \in \mathcal{P}} c_p \mathbf{X}_p$ is the injected artifact mixture. By construction, $\delta$ has zero mean and covariance $\Sigma_\delta = (\alpha^2/K)\,\Sigma_n$, where $\Sigma_n$ is the empirical artifact covariance estimated from the sampled $\{c_p \mathbf{X}_p\}$ ensemble and $\alpha$ is the augmentation scaling parameter, whose effect is visualized with $K=15$ in Figure~\ref{fig:pca-tsne}. We now show that, in expectation and to leading order, this procedure is equivalent to training on clean inputs with an added penalty that suppresses decoder sensitivity along directions of high artifact variance. 

\vspace{0.5em}

\begin{proposition}[PNA as anisotropic regularization]
\label{thm: general_loss}
Let $z(\bar{\mathbf{x}}; \theta) \in \mathbb{R}^V$ be the logits of a $C^3$ decoder, $\ell(z, y)$ a $C^3$ loss criterion with targets, $y$, and define the loss gradient $g = \nabla_z \ell$, loss Hessian $H_\ell = \nabla_z^2 \ell$, logit Jacobian $J_z = \partial_{\bar{\mathbf{x}}} z$, and logit Hessian $H_{z_r} = \nabla_{\bar{\mathbf{x}}}^2 z_r$ ($r = 1, \dots, V$), all evaluated at the clean $K$-trial average input $\bar{\mathbf{x}}$. Suppose the augmented input $\mathbf{x}_{\mathrm{aug}}=\bar{\mathbf{x}} + \delta$ has $\mathbb{E}[\delta \mid \bar{\mathbf{x}}, y] = 0$ and $\mathrm{Cov}(\delta) = (\alpha^2/K)\Sigma_n$. Then
\begin{align}
    \mathbb{E}_{\delta}[\ell_{\mathrm{aug}}(z, y)]
    \;=\;
    \ell_{\mathrm{clean}}(z, y)
    \;+\;
    \frac{\alpha^2}{2K} \!\left[\,
        \mathrm{Tr}\!\left(H_\ell J_z \Sigma_n J_z^\top\right)
        +
        \sum_{r=1}^V g_r\, \mathrm{Tr}(H_{z_r} \Sigma_n)
    \,\right]
    \;+\;
    o\!\left(\tfrac{\alpha^2}{K}\right).
\end{align}
If, in addition, either (a) $\mathrm{Tr}(H_{z_r}\Sigma_n) \approx 0$ for each $r$ (the decoder is locally linear in artifact-covariance directions), or (b) $g \approx 0$ at $\bar{\mathbf{x}}$ (the model predicts near-correctly at this data point), the second-order term simplifies to a loss-curvature-weighted Jacobian penalty:
\begin{align}
    \mathbb{E}_{\delta}[\ell_{\mathrm{aug}}(z,y)]
    \;\approx\;
    \ell_{\mathrm{clean}}(z,y)
    \;+\;
    \frac{\alpha^2}{2K}\, \mathrm{Tr}\!\left(H_\ell J_z \Sigma_n J_z^\top\right).
\end{align}
\end{proposition}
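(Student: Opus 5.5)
The plan is a second-order Taylor expansion of the composite map $\delta \mapsto \ell(z(\bar{\mathbf{x}}+\delta;\theta),y)$ about $\delta=0$, followed by taking expectations over $\delta$ and using its first two moments. First expand the logits to second order: for each $r$,
\begin{align*}
z_r(\bar{\mathbf{x}}+\delta) = z_r(\bar{\mathbf{x}}) + (J_z\delta)_r + \tfrac12 \delta^\top H_{z_r}\delta + R_r(\delta),
\end{align*}
with $R_r(\delta)=O(\|\delta\|^3)$ by the $C^3$ assumption on the decoder. Set $\Delta z = J_z\delta + \tfrac12 q(\delta) + R(\delta)$, where $q(\delta)_r = \delta^\top H_{z_r}\delta$. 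Then expand the loss in $z$: $\ell(z+\Delta z,y) = \ell(z,y) + g^\top \Delta z + \tfrac12 \Delta z^\top H_\ell \Delta z + O(\|\Delta z\|^3)$, again using that $\ell$ is $C^3$. Substituting and keeping only terms of total order at most two in $\delta$ gives
\begin{align*}
\ell_{\mathrm{aug}} = \ell_{\mathrm{clean}} + g^\top J_z\delta + \tfrac12\sum_{r} g_r\,\delta^\top H_{z_r}\delta + \tfrac12\,\delta^\top J_z^\top H_\ell J_z\delta + O(\|\delta\|^3).
\end{align*}

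Next, take $\mathbb{E}_\delta$ conditional on $(\bar{\mathbf{x}},y)$. The linear term vanishes because $\mathbb{E}[\delta\mid\bar{\mathbf{x}},y]=0$, and $g, J_z$ are evaluated at $\bar{\mathbf{x}}$, so they are constant under this conditioning. For the quadratic terms, use $\mathbb{E}[\delta^\top M\delta]=\mathrm{Tr}(M\,\mathbb{E}[\delta\delta^\top])=\mathrm{Tr}(M\,\mathrm{Cov}(\delta))$, which holds since the mean is zero. With $\mathrm{Cov}(\delta)=(\alpha^2/K)\Sigma_n$, this yields $\tfrac{\alpha^2}{2K}\mathrm{Tr}(J_z^\top H_\ell J_z\Sigma_n)$, which equals $\tfrac{\alpha^2}{2K}\mathrm{Tr}(H_\ell J_z\Sigma_nJ_z^\top)$ by cyclicity. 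It also yields $\tfrac{\alpha^2}{2K}\sum_r g_r\mathrm{Tr}(H_{z_r}\Sigma_n)$. This is exactly the stated bracket.

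The main obstacle is justifying that the expected remainder is $o(\alpha^2/K)$. The Taylor remainder gives $O(\mathbb{E}\|\delta\|^3)$ only locally. I would therefore assume, or argue from the construction of $\delta$ (bounded donor artifacts rescaled by bounded ratios), that $\delta=(\alpha/\sqrt K)\,\xi$ with $\xi$ having bounded support or a finite third moment, and that the derivatives are bounded on a neighborhood. Then $\mathbb{E}\|\delta\|^3 = O((\alpha^2/K)^{3/2}) = o(\alpha^2/K)$ as $\alpha^2/K\to0$. If unbounded $\delta$ must be allowed, I would truncate: split the expectation on the event $\|\delta\|\le\eta$ and its complement, and control the tail with Chebyshev plus a growth bound on $\ell$.

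Finally, the simplified form follows directly. Under (a) each $\mathrm{Tr}(H_{z_r}\Sigma_n)\approx0$, and under (b) each $g_r\approx0$; in either case the sum $\sum_r g_r\mathrm{Tr}(H_{z_r}\Sigma_n)$ drops out. What remains is the curvature-weighted Jacobian penalty. It is anisotropic because, when $H_\ell\succeq0$ (as for cross-entropy), it equals $\mathbb{E}\|H_\ell^{1/2}J_z\delta\|^2$ and so penalizes only sensitivity along high-variance directions of $\Sigma_n$.
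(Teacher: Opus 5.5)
Your proposal is correct and follows the paper's proof essentially step for step. Both expand the logits to second order and then the loss in $z$, substitute while keeping terms up to order two in $\delta$, take expectations using $\mathbb{E}[\delta]=0$ and $\mathbb{E}[\delta^\top M\delta]=\mathrm{Tr}(M\Sigma_\delta)$, apply trace cyclicity, and bound the remainder by $O(\mathbb{E}\|\delta\|^3)=O((\alpha^2/K)^{3/2})$. Your remainder step is in fact more careful than the paper's: you make the scaling $\delta=(\alpha/\sqrt{K})\,\xi$ explicit and add the bounded-derivative and truncation caveats, whereas the paper only assumes a finite third absolute moment.
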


\textit{Proof.} See Appendix \ref{app:general_loss}.

\vspace{1em}

\begin{corollary}[Anisotropic Jacobian Regularization under Squared Error Loss]
Let $\ell(z, y) = \|y - z\|_2^2$ be the squared error loss. Under the assumptions of Proposition \ref{thm: general_loss}, and provided either (a) $\mathrm{Tr}(H_{z_r} \Sigma_n) \approx 0$ for each $r$, or (b) $g = -2(y - z) \approx 0$ at $\bar{\mathbf{x}}$, applying this loss to augmented data averaged over $K$ trials is equivalent to applying it to clean data, with an anisotropic Jacobian penalty:
\[
\mathbb{E_{\delta}}[\ell_{\mathrm{aug}}(z,y)]
\approx 
\ell_{\mathrm{clean}}(z,y) 
+ 
\frac{\alpha^2}{K}\mathrm{Tr}\left( J_z \Sigma_n J_z^\top \right),
\]
where $\Sigma_\delta = \frac{\alpha^2}{K} \Sigma_n$ is the covariance of the injected artifact noise.
\end{corollary}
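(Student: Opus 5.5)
The corollary follows by specializing Proposition~\ref{thm: general_loss} to the squared error loss. The work is computing the loss gradient and loss Hessian in the logit variable $z$ and substituting them into the simplified second-order term.

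\textbf{Derivatives of the loss.} For $\ell(z,y)=\|y-z\|_2^2$ we have $g=\nabla_z\ell=-2(y-z)$ and $H_\ell=\nabla_z^2\ell=2I_V$. This loss is a polynomial, hence $C^\infty$, so the $C^3$ hypothesis on $\ell$ holds. The decoder hypotheses (a $C^3$ decoder, $\mathbb{E}[\delta\mid\bar{\mathbf{x}},y]=0$, and $\mathrm{Cov}(\delta)=(\alpha^2/K)\Sigma_n$) are inherited verbatim from the proposition. The expansion of Proposition~\ref{thm: general_loss} therefore applies, with bracketed term
\[
\mathrm{Tr}\!\left(2 J_z\Sigma_nJ_z^\top\right)+\sum_{r=1}^V g_r\,\mathrm{Tr}(H_{z_r}\Sigma_n).
\]

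\textbf{Dropping the curvature term.} Under (a), each $\mathrm{Tr}(H_{z_r}\Sigma_n)\approx0$, so the sum is negligible. Under (b), $g=-2(y-z)\approx0$, and the sum is again negligible because $|\sum_r g_r\mathrm{Tr}(H_{z_r}\Sigma_n)|\le\|g\|_2\,\|(\mathrm{Tr}(H_{z_r}\Sigma_n))_r\|_2$ by Cauchy--Schwarz, and the second factor is a fixed finite quantity at $\bar{\mathbf{x}}$. In both cases we land on the simplified form of the proposition. Substituting $H_\ell=2I$ and pulling the scalar out of the trace gives
\[
\frac{\alpha^2}{2K}\cdot 2\,\mathrm{Tr}(J_z\Sigma_nJ_z^\top)=\frac{\alpha^2}{K}\mathrm{Tr}(J_z\Sigma_nJ_z^\top),
\]
and the $o(\alpha^2/K)$ remainder is absorbed into $\approx$. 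This is the claimed display.

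\textbf{Interpretation as anisotropic regularization.} $\Sigma_n$ is a covariance matrix and hence positive semidefinite. Writing $\Sigma_n=\sum_k\lambda_k u_ku_k^\top$ gives
\[
\mathrm{Tr}(J_z\Sigma_nJ_z^\top)=\sum_k\lambda_k\|J_zu_k\|_2^2\ge0.
\]
This is a Jacobian penalty weighted by artifact variance along each eigendirection, which justifies the name anisotropic Jacobian regularization. Finally, $\Sigma_\delta=(\alpha^2/K)\Sigma_n$ is just the covariance hypothesis restated, so the penalty equals $\mathrm{Tr}(J_z\Sigma_\delta J_z^\top)$.

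\textbf{Main obstacle.} There is no real obstacle beyond the proposition itself. The only point needing care is the factor of $2$ from $H_\ell=2I$ cancelling the $\tfrac12$, together with the Cauchy--Schwarz argument making the $g\approx0$ case rigorous at the level of approximation the proposition uses.
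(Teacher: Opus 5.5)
Your proposal is correct and follows essentially the same route as the paper: invoke the simplified form of Proposition~\ref{thm: general_loss} under (a) or (b), substitute $H_\ell = 2I_V$, and let the factor of $2$ cancel the $\tfrac{1}{2}$. Your Cauchy--Schwarz justification of case (b) and your eigendecomposition argument that the penalty is nonnegative are small additions the paper does not spell out, but they leave the argument itself unchanged.
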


\textit{Proof.} See Appendix \ref{thm:corr1}.

\vspace{1em}

\begin{corollary}[PNA Regularization under Cross-Entropy Loss]
\label{thm: ce_loss}
Let $\ell(z, y) = -\sum_{i=1}^V y_i \log(\sigma(z)_i)$ be the softmax cross-entropy loss, where $\sigma(z)_i = e^{z_i} / \sum_j e^{z_j}$ denotes the predicted probability for class $i$. Under the assumptions of Proposition \ref{thm: general_loss}, and provided either (a) $\mathrm{Tr}(H_{z_r} \Sigma_n) \approx 0$ for each $r$, or (b) $g = p - y \approx 0$ at $\bar{\mathbf{x}}$, the augmented loss reduces to minimizing the clean loss plus a Fisher-weighted anisotropic Jacobian penalty:
\[
\mathbb{E_{\delta}}[\ell_{\mathrm{aug}}(z, y)]
\approx 
\ell_{\mathrm{clean}}(z, y) 
+ 
\frac{\alpha^2}{2K} \mathrm{Tr}\left( \left[ \mathrm{Diag}(p) - p p^\top \right] J_z \Sigma_{n} J_z^\top \right),
\]
where $p = \sigma(z)$ is the vector of predicted probabilities and $\Sigma_\delta = \frac{\alpha^2}{K} \Sigma_n$ is the covariance of the injected artifact noise.
\end{corollary}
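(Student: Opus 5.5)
The plan is to specialize Proposition~\ref{thm: general_loss} by computing the loss gradient $g$ and loss Hessian $H_\ell$ for the softmax cross-entropy, and then substitute them into its simplified conclusion. We work with logits $z\in\mathbb{R}^V$, $p=\sigma(z)$, and a one-hot target $y$. More generally we only need $\sum_i y_i=1$, which is what makes the normalization term collapse.

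\textbf{Step 1: the gradient.} Write $\ell(z,y)=-\sum_i y_i z_i + \left(\sum_i y_i\right)\log\sum_j e^{z_j}$. Since $\sum_i y_i=1$, differentiating gives $\partial_{z_k}\ell = -y_k + e^{z_k}/\sum_j e^{z_j} = p_k - y_k$, so $g=p-y$. This confirms the form of hypothesis (b) stated in the corollary.

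\textbf{Step 2: the Hessian.} The second derivative is $\partial_{z_m}(p_k-y_k)=\partial_{z_m}\sigma(z)_k$. The standard softmax Jacobian identity gives $\partial_{z_m}\sigma(z)_k = p_k(\mathbb{1}[k=m]-p_m)$. Hence $H_\ell=\mathrm{Diag}(p)-pp^\top$, which is independent of $y$ and is the Fisher information of the categorical likelihood in logit space. We also note that $\ell$ is $C^\infty$ in $z$, so the $C^3$ requirement of Proposition~\ref{thm: general_loss} holds for the loss. Combined with the assumed $C^3$ decoder, every hypothesis of the proposition is satisfied, including $\mathbb{E}[\delta\mid\bar{\mathbf{x}},y]=0$ and $\mathrm{Cov}(\delta)=(\alpha^2/K)\Sigma_n$.

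\textbf{Step 3: substitution.} Proposition~\ref{thm: general_loss} gives the second-order expansion with the two bracketed terms. Under (a), each $\mathrm{Tr}(H_{z_r}\Sigma_n)\approx 0$, so the term $\sum_r g_r\mathrm{Tr}(H_{z_r}\Sigma_n)$ is negligible. Under (b), $g=p-y\approx 0$, so the same sum is negligible because its coefficients are small and $\mathrm{Tr}(H_{z_r}\Sigma_n)$ is finite at $\bar{\mathbf{x}}$. Either way we are left with $\frac{\alpha^2}{2K}\mathrm{Tr}(H_\ell J_z\Sigma_nJ_z^\top)$. Inserting $H_\ell$ from Step 2 yields the claimed penalty $\frac{\alpha^2}{2K}\mathrm{Tr}\big([\mathrm{Diag}(p)-pp^\top]J_z\Sigma_nJ_z^\top\big)$, with the $o(\alpha^2/K)$ remainder absorbed into the approximation.

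\textbf{Interpretation.} Since $\mathrm{Diag}(p)-pp^\top$ and $J_z\Sigma_nJ_z^\top$ are both positive semidefinite, the trace of their product is nonnegative. The added term is therefore a genuine penalty, anisotropic through $\Sigma_n$ and Fisher-weighted through $H_\ell$.

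\textbf{Main obstacle.} The only nontrivial computation is the softmax Hessian in Step 2, which is routine. The delicate point is conceptual: the "$\approx$" inherits the leading-order nature of the proposition. Under (b), $g$ is small but not exactly zero, so the dropped cross term is $O(\|g\|\,\alpha^2/K)$ rather than exactly zero. The plan is to state this explicitly, reading the corollary's approximation in the same sense as the proposition's simplified form.
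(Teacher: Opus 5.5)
Your proposal is correct and follows the paper's proof essentially verbatim. Both compute $g = p - y$, obtain $H_\ell = \mathrm{Diag}(p) - pp^\top$ from the softmax Jacobian, and substitute this into the simplified form of Proposition~\ref{thm: general_loss} under regime (a) or (b). Your additions are small clarifications the paper leaves implicit: you state that $\sum_i y_i = 1$ is needed for the gradient, you note that the penalty is nonnegative as the trace of a product of two PSD matrices, and you observe that the term dropped under (b) is $O(\|g\|\,\alpha^2/K)$.
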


\textit{Proof.} See Appendix \ref{thm:proof_CE}.

Proposition 1 and its corollaries are related to the classical equivalence between Gaussian noise and Tikhonov regularization \cite{bishop1995training}. Notably, in order to ensure that the resulting regularizer is non-negative, the loss criteria must be convex with respect to the logits (a condition satisfied by both squared error and cross-entropy). The simplification of the theorem to a pure Jacobian penalty requires the second-order logit Hessian terms to vanish, which occurs if either (a) the decoder is locally linear in the directions of artifact noise, or (b) the loss gradient $g$ is approximately zero. For a classifier trained under cross-entropy, condition (b) corresponds to the regime where predictions are well-aligned with targets ($p \approx y$). Consequently, this Jacobian approximation becomes increasingly precise as the model converges, characterizing the effective regularization landscape during the later stages of training.

\subsection{Preprocessing \& Implementation Details}

In our pipeline, we perform data augmentation before preprocessing so that augmentation preserves the original relative scales and variances of physiological signals. We follow the preprocessing strategy of \cite{defossez2023nature}, applying a robust scaler fitted only to the training data to avoid leakage. Similarly, ICA and augmentation are only applied to training data, and validation/test sets are recorded separately to prevent leakage. Full details are presented in Appendix~\ref{app: model_data_hyp}.

\section{Experiments}

\begin{figure*}[t]
    \centering
    \includegraphics[width=\linewidth]{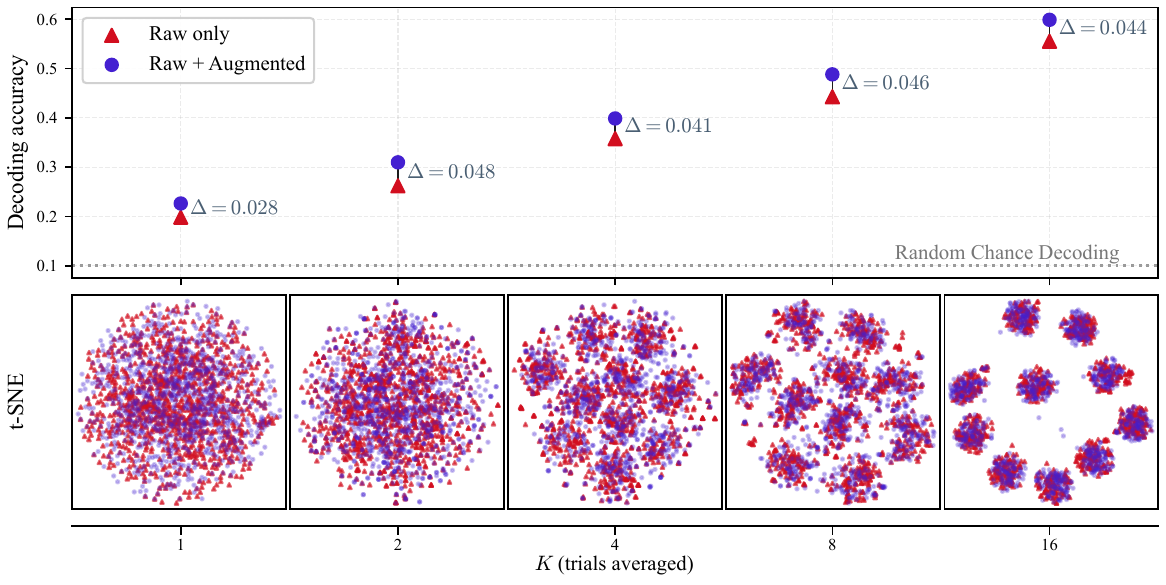}
    \caption{\textbf{MLP decoding accuracy uplift from PNA augmentation and associated t-SNE embeddings at various levels of averaging}. The top plot uses $10k$ training trials for raw-only runs (red) and an additional $10k$ augmented trials for raw + augmented runs (blue), both over 5 seeds. The bottom plots show t-SNE embeddings of raw (red triangles) and augmented (blue dots) data at each level of averaging.}
    \label{fig:k-sweep}
\end{figure*}

We evaluate PNA on the task of classifying imagined digits from MEG data, and compare to a set of common brain-to-speech augmentations. 

\textbf{Dataset.} We conduct experiments on MegNIST, a single-subject MEG dataset comprising 5 hours (12,000 trials) of class-balanced imagined digits (0–9). We choose MegNIST because (i) to our knowledge, it is the only publicly available MEG dataset for internal speech, and (ii) it includes EOG and ECG reference signals alongside MEG recordings, enabling PNA. Preprocessing and data splits are detailed in Table \ref{tab:preprocessing} of Appendix \ref{app: model_data_hyp}.

\textbf{Models.} For each augmentation setting, we train two models: a multilayer perceptron (MLP) and EEGNet \citep{lawhern2018eegnet}. This choice enables a direct comparison between a simple architecture with flattened inputs and a modern convolutional architecture operating on 2D (unflattened) inputs. We optimize the model hyperparameters with grid search, and report final values in Appendix~\ref{app: model_data_hyp}.

\subsection{Decoding Performance and the Effect of Averaging}

\begin{table}[t]
    \begin{center}
        \begin{small}
            \begin{sc}
                \begin{tabular}{lcc}
                    \toprule
                    \textit{Data Strategy} & MLP & EEGNet \\
                    \midrule
                    \textit{Clean Only} & $0.255$ {\scriptsize $\pm$ $0.008$} & $0.321$ {\scriptsize $\pm$ $0.004$} \\
                    \quad + Averaging & $\textbf{0.581}$ {\scriptsize $\pm$ $0.015$} & $0.678$ {\scriptsize $\pm$ $0.022$} \\
                    \addlinespace[6pt]
                    \textit{Raw Only} ($p=0$) & $0.255$ {\scriptsize $\pm$ $0.008$} & $0.337$ {\scriptsize $\pm$ $0.005$} \\
                    \quad + Averaging & $0.542$ {\scriptsize $\pm$ $0.011$} & $0.730$ {\scriptsize $\pm$ $0.014$} \\
                    \addlinespace[6pt] 
                    \textit{Raw + PNA}  ($p=0.5$) & $0.262$ {\scriptsize $\pm$ $0.006$} & $0.332$ {\scriptsize $\pm$ $0.011$} \\
                    \quad + Averaging & $0.574$ {\scriptsize $\pm$ $0.020$} & $\textbf{0.763}$ {\scriptsize $\pm$ $0.011$} \\
                    \addlinespace[6pt]
                    \textit{PNA Only} ($p=1$) & $0.267$ {\scriptsize $\pm$ $0.005$} & $0.319$ {\scriptsize $\pm$ $0.017$} \\
                    \quad + Averaging & $0.504$ {\scriptsize $\pm$ $0.030$} & $0.626$ {\scriptsize $\pm$ $0.015$} \\
                    \bottomrule
                \end{tabular}
            \end{sc}
        \end{small}
    \end{center}
    \caption{\textbf{Ablation of Data Augmentation Strategies.} \textit{Raw Only}, \textit{Clean Only}, and \textit{PNA Only} each use $10k$ training trials. \textit{Raw + PNA} applies augmentation online during training with probability $p=0.5$. PNA is applied before 10-trial averaging. Random chance decoding yields an expected accuracy of $0.1$. Best results over 5 seeds are shown in \textbf{bold}; standard errors are reported in \scriptsize{scriptsize}.}
    \label{tab:data_aug_results}
\end{table}

\begin{figure}[t]
    \centering
    \includegraphics[width=\linewidth]{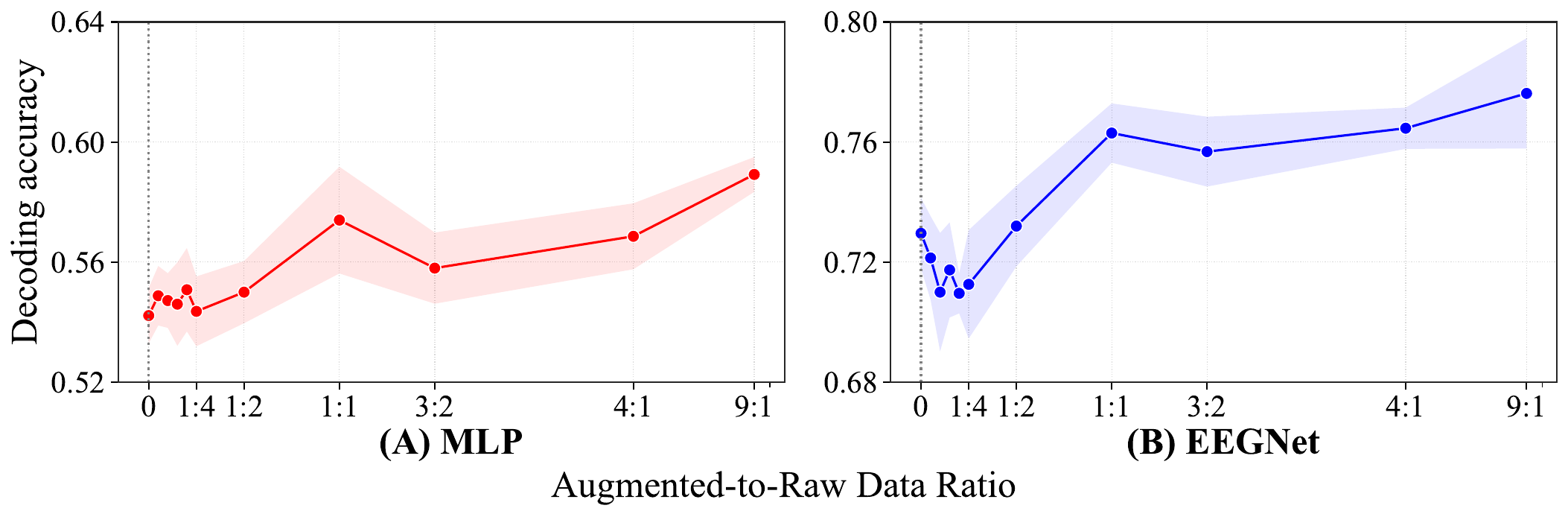}
    \caption{\textbf{Decoding accuracy scales with PNA samples.} Models are trained on a fixed budget of $10k$ raw trials; the $x$-axis represents the ratio of augmented-to-raw data, ranging from 0 (baseline, vertical dotted line) to $90k$ ($9:1$ ratio). Shaded regions denote $95\%$ confidence intervals ($n=5$ independent seeds); all runs use 10-trial averaging. }
    \label{fig:num_aug_results}
\end{figure}

Table \ref{tab:data_aug_results} shows an ablation of PNA, using various combinations of raw data, clean data, augmented data, and 10-trial averaging.  The framework does not overcome the SNR bottleneck in the single-trial, however its impact is substantially amplified when combined with multi-trial averaging. With 10-trial averaging, accuracies increase from $54.2\% \pm 1.1\%$ to $57.4\% \pm 2.0\%$ for MLP and from $73.0\% \pm 1.4\%$ to $76.3\% \pm 1.1\%$ for EEGNet when augmented samples are included. This result suggests that augmentation is most effective once averaging has mitigated extreme low-SNR conditions, allowing models to better exploit invariances introduced by PNA. Full results across models and training configurations are provided in Appendix~\ref{app: full_pna_results}. We use a 1:1 raw-to-augmented data ratio for simplicity and as supported by Figure \ref{fig:num_aug_results}.

\subsection{Comparison to Augmentation Baselines}

Table \ref{tab:augmentation_results_old} compares the effects of several augmentation baselines, with and without 10-trial averaging. We find that PNA, when added to existing raw data, is able to outperform training on raw data with MLP, and on 10-trial averaged data with EEGNet, the latter of which achieves the highest decoding accuracy among all baselines, $76.3\% \pm 1.1\%$. Notably, Figure \ref{fig:k-sweep} provides empirical evidence that the enhancement under PNA remains relatively consistent beyond 2 trials of averaging.

\begin{table*}[h!t]
\centering
\begin{tabular}{l c c c c}
\toprule
 & \multicolumn{2}{c}{\textsc{MLP}} & \multicolumn{2}{c}{\textsc{EEGNet}} \\
\cmidrule(lr){2-3} \cmidrule(lr){4-5}
\textit{Augmentation Type} & Single-Trial & 10-Trial & Single-Trial & 10-Trial \\
\midrule
\textit{None (Raw Data)}     & $0.255$ {\scriptsize $\pm 0.008$} & $0.542$ {\scriptsize $\pm 0.011$} & $0.337$ {\scriptsize $\pm 0.005$} & $0.730$ {\scriptsize $\pm 0.014$} \\
\addlinespace

{Raw + White Noise}        & $0.261$ {\scriptsize $\pm 0.007$} & $0.578$ {\scriptsize $\pm 0.011$} & $0.329$ {\scriptsize $\pm 0.006$} & $0.747$ {\scriptsize $\pm 0.012$} \\

{Raw + Smooth Time Mask}   & $0.256$ {\scriptsize $\pm 0.008$} & $0.573$ {\scriptsize $\pm 0.015$} & $0.317$ {\scriptsize $\pm 0.009$} & $0.713$ {\scriptsize $\pm 0.023$} \\

{Raw + Frequency Shift}    & $\textbf{0.270}$ {\scriptsize $\pm 0.005$} & $\textbf{0.596}$ {\scriptsize $\pm 0.022$} & $0.321$ {\scriptsize $\pm 0.010$} & $0.739$ {\scriptsize $\pm 0.016$} \\

{Raw + Temporal Shift}     & $0.265$ {\scriptsize $\pm 0.004$} & $0.584$ {\scriptsize $\pm 0.019$} & $0.328$ {\scriptsize $\pm 0.006$} & $0.736$ {\scriptsize $\pm 0.016$} \\

{Raw + Amplitude Scaling}  & $0.260$ {\scriptsize $\pm 0.010$} & $0.555$ {\scriptsize $\pm 0.017$} & $\textbf{0.341}$ {\scriptsize $\pm 0.009$} & $0.735$ {\scriptsize $\pm 0.021$} \\
\addlinespace

{Raw + PNA}                & $0.262$ {\scriptsize $\pm 0.006$} & $0.574$ {\scriptsize $\pm 0.020$} & $0.332$ {\scriptsize $\pm 0.011$} & $\textbf{0.763}$ {\scriptsize $\pm 0.011$} \\
\bottomrule
\end{tabular}
\caption{\textbf{Comparison against existing brain-to-speech augmentations.} All augmentation methods use $10k$ raw trials with augmentation applied online with probability $0.5$. \textit{None} uses $10k$ raw trials without augmentation. Averaging is performed after PNA and before each baseline augmentation. Best results are shown in \textbf{bold}; standard errors are reported in \scriptsize{scriptsize}.}
\label{tab:augmentation_results_old}
\end{table*}

\section{Discussion}
\label{sec:discussion}

We present Physiological Noise Augmentation (PNA), an ICA-based framework that enforces invariance to physiological artifacts in non-invasive brain-to-speech decoding. By remixing artifact components to generate diverse, label-preserving samples, PNA improves decoding accuracy by $4.7\%$ over real-data training alone for MEG-based imagined digit classification with EEGNet. We further demonstrate that PNA and multi-trial averaging are complementary: PNA reduces sensitivity to tracked nuisances, while averaging suppresses residual task-agnostic variability, together approximating Jacobian regularization. 

\textbf{Limitations and Future Work.} PNA currently requires artifact reference signals to be recorded during data collection. Future work will investigate classification methods to identify nuisance artifact components from ICA on MEG, eliminating the need for reference channels and substantially expanding the range of compatible datasets. In addition, MegNIST is limited to a single subject performing a constrained task. Extending PNA to multi-subject datasets and broader internal speech vocabularies will clarify the transferability of artifacts.

More broadly, PNA is designed to benefit from access to a wider range of task-agnostic physiological artifacts. Beyond EOG and ECG, future work should incorporate additional high-amplitude reference modalities, such as EMG, to maximally teach the decoder to become invariant to task-irrelevant artifacts.

\textbf{Impact and Implications.}

The development of PNA represents a methodological shift from heuristic input-level perturbations toward a physiologically-grounded regularization framework. PNA provides a principled mechanism for injecting domain-specific inductive biases into neural decoders by explicitly drawing from reference channels to isolate structured nuisances. Our theoretical and empirical results demonstrate that the associated anisotropic Jacobian penalty substantially reduces the ``repetition burden'' required for high-accuracy decoding, directly addressing the core latency--accuracy trade-off that has long hindered the clinical viability of non-invasive BCIs. More broadly, PNA establishes a scalable template for artifact-aware machine learning in any neural recording modality where reference-tracked noise typically confounds task-relevant signals.

\bibliographystyle{plainnat} 
\bibliography{references}

\newpage
\appendix

\section{Additional Background}

\subsection{Independent Component Analysis}
\label{app: ICA}

\textit{Independent component analysis} (ICA) is modeled on the assumption that a matrix of spatiotemporal sensor recordings, $\mathbf{X}$, arises from instantaneous linear mixing of $N$ independent sources,
$\mathbf{S} = [\mathbf{s}_{1},\ldots,\mathbf{s}_{T}] \in\mathbb{R}^{N\times T}$ according to
\begin{align*}
\mathbf{x}_{t} &= \mathbf{A}\mathbf{s}_{t}, \qquad \text{for } t=1,\ldots,T,
\end{align*}
or equivalently,
\begin{align}
    \label{eq:mixing}
    \mathbf{X} &= \mathbf{A}\mathbf{S},
\end{align}
where $\mathbf{A}\in\mathbb{R}^{C\times N}$ is an unknown mixing matrix. \textit{Independent component analysis} (ICA) seeks to recover latent independent sources (rows of $\mathbf{S}$), by estimating an unmixing matrix $\mathbf{W}\in\mathbb{R}^{N\times C}$ which yields component estimates,
\begin{align}
\widehat{\mathbf{S}}=\mathbf{W}\mathbf{X}, \label{eq:unmixing}
\end{align}
such that the rows of $\widehat{\mathbf{S}}$ are as statistically independent as possible \cite{comon1994independent}. A canonical objective is to pick the optimal unmixing matrix, $\mathbf{W}^*$, to minimize the mutual information of the component random variables:
\begin{align}
\mathbf{W}^\star \in \operatorname*{\arg\min}_{\mathbf{W}\in\mathbb{R}^{N\times C}}
I\!\left(\hat{\mathbf{s}}_{1,:},\ldots,\hat{\mathbf{s}}_{N,:}\right),
\label{eq:ica_mi}
\end{align}
where \(\hat{\mathbf{s}}_{i,:} \in \mathbb{R}^{1\times T}\) is the \(i\)-th row of $\widehat{\mathbf{S}}$.

Notably, the estimation of $\mathbf{W}$ was originally formulated as training a single-layer neural network for unsupervised learning, as in the Infomax ICA algorithm \citep{bellsejnowski1995infomaxica}. However, current implementations typically use the FastICA algorithm, which improves computational efficiency by using a fixed-point iteration to maximize a non-Gaussianity contrast as proxy for statistical independence \citep{hyvarinen1999fastica, hyvarinenOja2000ica}.\footnote{Most ICA algorithms also approximate the mixing matrix, $\mathbf{A}$, by taking an inverse or pseudoinverse of $\mathbf{W}$.}

\subsection{Multi-Trial Averaging and Resampling}
\label{app: averaging}

We define the $K$-trial average for a set of trials $\{ \mathbf{X}^{(k)} \}_{k=1}^K$ as the spatiotemporal mean:
\begin{equation}
    \overline{\mathbf{X}} = \frac{1}{K}\sum_{k=1}^K \mathbf{X}^{(k)}.
\end{equation}
In brain-to-speech decoding, $K$-trial averaging at inference requires the patient to imagine the target stimulus $K$ times. This approach assumes that each trial consists of a phase-locked neural signal plus independent stochastic noise; consequently, averaging preserves the coherent signal component while the noise variance is attenuated by a factor of $K$. For $K$ condition-matched trials with covariance $\boldsymbol{\Sigma}$, the sample average $\overline{\mathbf{X}}$ satisfies $\mathrm{Cov}(\overline{\mathbf{X}}) = \boldsymbol{\Sigma}/K$ under the assumption of independence, yielding a $\sqrt{K}$ scaling in Signal-to-Noise Ratio (SNR) \cite{dawson1954summation}.

During training, we consider two averaging strategies:
\begin{enumerate}
    \item \textbf{Averaging without resampling:} Trials are partitioned into disjoint groups of size $K$ and averaged. While simple, this reduces the effective training set size by a factor of $K$, limiting input variability.
    \item \textbf{Averaging with resampling:} Subsets of size $K$ are randomly drawn (without replacement within each subset) from the full training set. For a dataset of size $N$, there are $\binom{N}{K}$ possible subsets, a combinatorial explosion (when $\min(K, N-K)  >> 0$) that allows for massive data augmentation while preserving the original dataset's coverage.
\end{enumerate}

In this work, we mainly employ averaging with resampling. To ensure a fair comparison with single-trial models, we fix the number of generated samples to match the original training set cardinality $N$, unless otherwise stated. For robustness, however, we also also re-run certain main results to verify that outcomes also hold empirically under averaging without resampling (Figure \ref{fig:averaging_without_resampling_results} in Appendix \ref{app: full_pna_results}).

Crucially, averaging implicitly infuses task awareness into output data, as phase-locked portions of the signal remain relatively unaffected, while other components destructively interfere.\footnote{MegNIST and similar datasets prompt participants to imagine each label at a screen-cued moment, enabling precise temporal alignment across trials.} 

\section{Proofs of Theoretical Results}
\label{app: proofs}

\subsection{Proof of Proposition 1}
\label{app:general_loss}

\begin{proof}
The proof proceeds by applying a multivariate Taylor expansion to the logits with respect to the input noise, then expanding the loss with respect to the perturbed logits.

\textbf{Step 1: Expansion of the logits.} 
Let $\Delta z = z(\bar{\mathbf{x}} + \delta;\theta) - z(\bar{\mathbf{x}};\theta)$. Expanding each coordinate $z_r$ to second order around $\bar{\mathbf{x}}$:
\[
\Delta z_r = (\nabla_{\bar{\mathbf{x}}} z_r)^\top \delta + \tfrac{1}{2} \delta^\top H_{z_r} \delta + \mathcal{O}(\|\delta\|^3),
\]

\textbf{Step 2: Expansion of the loss.}
Expanding $\ell(z + \Delta z, y)$ around $z = z(\bar{\mathbf{x}};\theta)$:
\[
\ell(z + \Delta z, y) = \ell(z, y) + g^\top \Delta z + \tfrac{1}{2} \Delta z^\top H_\ell \Delta z + \mathcal{O}(\|\Delta z\|^3).
\]
Since $\|\Delta z\| = \mathcal{O}(\|\delta\|)$, the trailing term is $\mathcal{O}(\|\delta\|^3)$.

\textbf{Step 3: Substitution.}
Substituting the expression for $\Delta z$:
\begin{itemize}
    \item $g^\top \Delta z = g^\top J_z \delta + \tfrac{1}{2} \sum_{r=1}^V g_r\, \delta^\top H_{z_r}\delta + \mathcal{O}(\|\delta\|^3)$.
    \item $\tfrac{1}{2}\Delta z^\top H_\ell \Delta z = \tfrac{1}{2}\,\delta^\top J_z^\top H_\ell J_z \delta + \mathcal{O}(\|\delta\|^3)$, where the cross term $(J_z\delta)^\top H_\ell\,\mathbf{q}(\delta)$ contributes at $\mathcal{O}(\|\delta\|^3)$ and is absorbed into the remainder.
\end{itemize}
Collecting,
\[
\ell(z + \Delta z, y) = \ell(z, y) + g^\top J_z \delta + \tfrac{1}{2}\delta^\top J_z^\top H_\ell J_z \delta + \tfrac{1}{2}\sum_{r=1}^V g_r\, \delta^\top H_{z_r}\delta + \mathcal{O}(\|\delta\|^3).
\]

\textbf{Step 4: Expectation and final substitution.}
Taking expectation over $\delta$ given $(\bar{\mathbf{x}}, y)$, using $\mathbb{E}_\delta[\delta] = 0$ and $\mathbb{E}_\delta[\delta^\top A \delta] = \mathrm{Tr}(A \Sigma_\delta)$:
\[
\mathbb{E}_\delta[\ell_{\mathrm{aug}}(z, y)] = \ell_{\mathrm{clean}}(z, y) + \tfrac{1}{2}\mathrm{Tr}(J_z^\top H_\ell J_z \Sigma_\delta) + \tfrac{1}{2}\sum_{r=1}^V g_r\, \mathrm{Tr}(H_{z_r}\Sigma_\delta) + \mathbb{E}_\delta[\mathcal{O}(\|\delta\|^3)].
\]
For $\delta$ with $\mathrm{Cov}(\delta) = (\alpha^2/K)\Sigma_n$ and finite third absolute moment, $\mathbb{E}_\delta[\|\delta\|^3] = \mathcal{O}((\alpha^2/K)^{3/2})$, so the remainder is $o(\alpha^2/K)$ as $\alpha^2/K \to 0$. Using trace cyclicity, $\mathrm{Tr}(J_z^\top H_\ell J_z \Sigma_\delta) = \mathrm{Tr}(H_\ell J_z \Sigma_\delta J_z^\top)$, and substituting $\Sigma_\delta = (\alpha^2/K)\Sigma_n$:
\[
\mathbb{E}_\delta[\ell_{\mathrm{aug}}(z, y)] = \ell_{\mathrm{clean}}(z, y) + \frac{\alpha^2}{2K}\,\mathrm{Tr}(H_\ell J_z \Sigma_n J_z^\top) + \frac{\alpha^2}{2K}\sum_{r=1}^V g_r\, \mathrm{Tr}(H_{z_r}\Sigma_n) + o\!\left(\tfrac{\alpha^2}{K}\right).
\]
\end{proof}

\subsection{Proof of Corollary 1}
\label{thm:corr1}
\begin{proof}
Under regime (a) or (b) of Proposition~\ref{thm: general_loss},
\begin{align}
\mathbb{E}_\delta[\ell_{\mathrm{aug}}(z, y)]
\;\approx\;
\ell_{\mathrm{clean}}(z, y)
\;+\;
\frac{\alpha^2}{2K}\, \mathrm{Tr}\!\left( H_\ell J_z \Sigma_n J_z^\top \right).
\label{eqn:mse-prop1}
\end{align}
For the squared error loss $\ell(z, y) = \|y - z\|_2^2 = \sum_{i=1}^V (y_i - z_i)^2$, the loss Hessian is constant:
\[
H_\ell = \nabla_z^2 \ell(z, y) = 2 I_V,
\]
where $I_V$ is the $V \times V$ identity matrix. Substituting into the regularization term of Equation~\eqref{eqn:mse-prop1}:
\[
\frac{\alpha^2}{2K}\, \mathrm{Tr}(2 I_V\, J_z \Sigma_n J_z^\top)
\;=\;
\frac{\alpha^2}{K}\, \mathrm{Tr}(J_z \Sigma_n J_z^\top),
\]
yielding
\[
\mathbb{E}_\delta[\ell_{\mathrm{aug}}(z, y)]
\;\approx\;
\ell_{\mathrm{clean}}(z, y)
\;+\;
\frac{\alpha^2}{K}\, \mathrm{Tr}(J_z \Sigma_n J_z^\top),
\]
as claimed.
\end{proof}

\subsection{Proof of Corollary 2}
\label{thm:proof_CE}

\begin{proof}
Under regime (a) or (b) of Proposition~\ref{thm: general_loss},
\begin{align}
\mathbb{E}_\delta[\ell_{\mathrm{aug}}(z, y)]
\;\approx\;
\ell_{\mathrm{clean}}(z, y)
\;+\;
\frac{\alpha^2}{2K}\, \mathrm{Tr}\!\left( H_\ell J_z \Sigma_n J_z^\top \right).
\label{eqn:restate_prop_ce}
\end{align}
For the softmax cross-entropy loss $\ell(z, y) = -\sum_{i=1}^V y_i \log p_i$ with $p = \sigma(z)$, the gradient with respect to the $j$-th logit is $g_j = p_j - y_j$, so the Hessian elements are
\[
(H_\ell)_{ij}
\;=\;
\frac{\partial g_j}{\partial z_i}
\;=\;
\frac{\partial p_j}{\partial z_i}
\;=\;
\begin{cases}
p_j(1 - p_j) & \text{if } i = j,\\[2pt]
-p_i p_j & \text{if } i \neq j,
\end{cases}
\]
using the standard derivative of the softmax. In closed form,
\[
H_\ell
\;=\;
\nabla_z^2 \ell(z, y)
\;=\;
\mathrm{Diag}(p) - p p^\top,
\]
the predictive Fisher information at $p$. Substituting into Equation~\eqref{eqn:restate_prop_ce} yields
\[
\mathbb{E}_\delta[\ell_{\mathrm{aug}}(z, y)]
\;\approx\;
\ell_{\mathrm{clean}}(z, y)
\;+\;
\frac{\alpha^2}{2K}\, \mathrm{Tr}\!\left( [\mathrm{Diag}(p) - p p^\top]\, J_z \Sigma_n J_z^\top \right),
\]
as claimed.
\end{proof}

\section{Additional Data Visualization}
\label{apx: data_vis}

\begin{figure}[H]
    \centering
    \includegraphics[width=\linewidth]{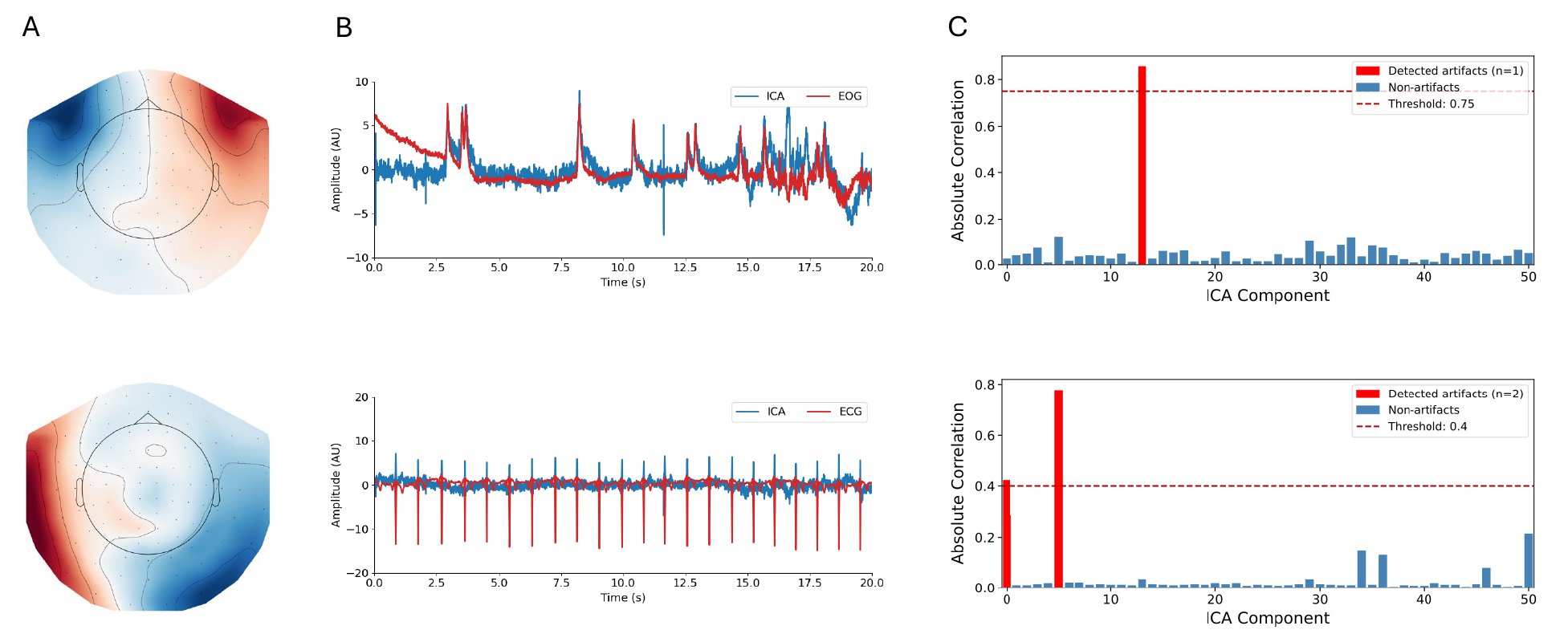}
    \caption{ICA artifact component selection for EOG and ECG signals. The topographies captured in column \textbf{A} show the relative strengths of the ICA mapping from source to sensor space for a sample EOG (top) and ECG (bottom) component. The corresponding ICA component waveforms are captured in column \textbf{B}, where they are compared to the reference EOG and ECG signals. Finally, column \textbf{C} show the absolute Pearson correlations between the first 50 ICA components and the artifact waveform for EOG (top) and ECG (bottom).}
    \label{fig:ICA-art}
    \end{figure}

\begin{figure}[H]
    \centering
    \includegraphics[width=0.8\linewidth]{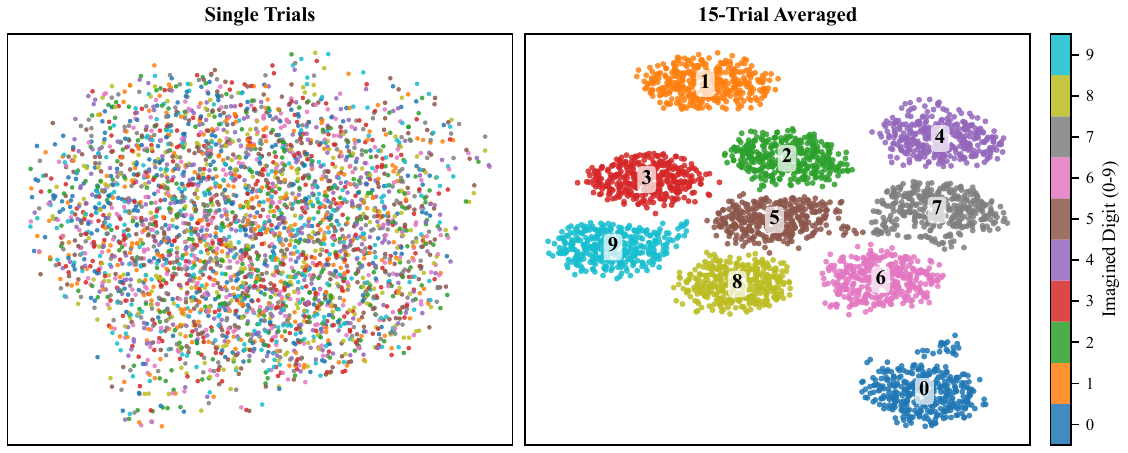}
    \caption{UMAP embeddings of intra-patient MEG recordings for imagined digits (0–9), shown for single trials (left) and 15-trial averages (right). Averaging improves cluster separability, and—unlike t-SNE, UMAP preserves aspects of global structure, revealing a separation between representations of digit 0 and digits 1–9}
    \label{fig:UMAP_classes}
\end{figure}

Unlike t-SNE, UMAP preserves some global structure, and the observed separation of digit 0 from digits 1–9 may reflect a systematic difference in neural representation rather than a purely local clustering artifact. This could arise from semantic differences between zero and non-zero numerals or from task-related strategies. While intgriguing, the effect is preliminary and must be validated across multiple subjects and modalities.

\begin{figure}[H]
    \centering
    \includegraphics[width=\linewidth]{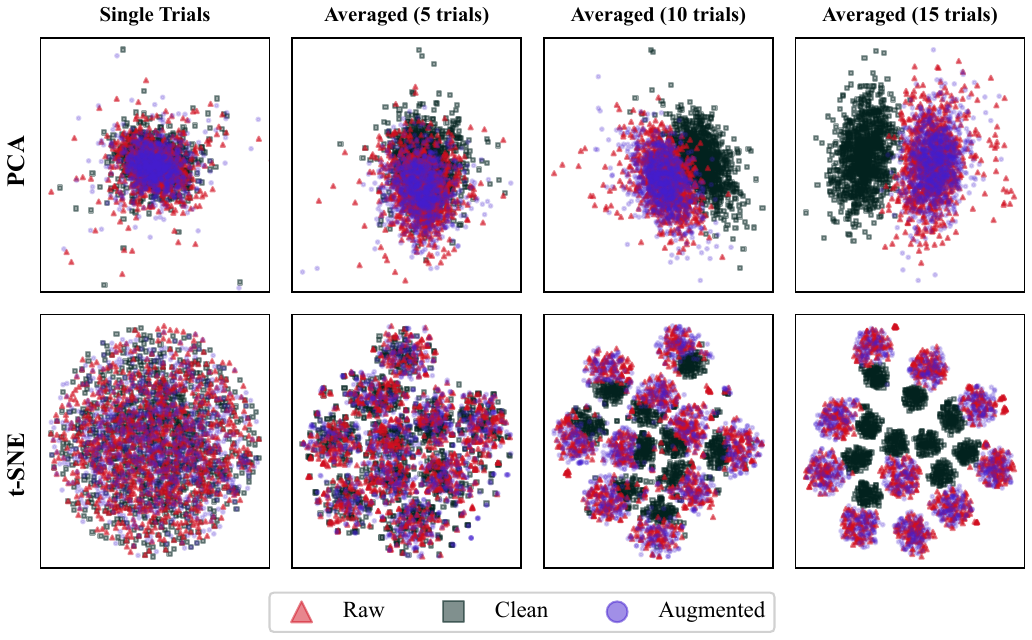}
    \caption{PCA and t-SNE embeddings of raw, augmented (ocular and cardiac), and clean data at various levels of averaging. While raw and augmented data remain closely aligned, the cluster of clean data separates with additional averaging, indicating that PNA correctly restores the corruption to artifact-cleaned data that would be seen at test time. Under t-SNE, label-wise clusters emerge for substantial averaging.}
    \label{fig:placeholder}
\end{figure}

\section{Model, Pre-processing and Baseline Augmentation Parameters}
\label{app: model_data_hyp}

\begin{table}[H]
    \centering
    \begin{tabular}{lc}
    \toprule
    \textbf{Parameter} & \textbf{Value} \\
    \midrule    
    \textbf{Averaging} & \\
    \# Samples Averaged ($K$) & 10 \\
    \# Training Samples (using resampling) & 10,000 \\
    \midrule    
    \textbf{Architectural Parameters} & \\
    MLP \# Hidden Layers & 1 \\
    MLP Hidden Layer Width & 128 \\
    EEGNet \# Feature Maps ($F_1$) & 16 \\
    EEGNet \# Spatial Filters per Feature Map ($D$) & 4 \\
    EEGNet \# Pointwise Filters ($F_2$) & 64 \\
    EEGNet Kernel Length & 25 \\
    \midrule
    \textbf{Training} & \\
    Batch Size & 64 \\
    Maximum \# Training Epochs & 200 \\
    Early Stopping & After 25 Epochs \\
    MLP Dropout Rate & 0.35 \\
    MLP Learning Rate & 0.0001 \\
    MLP Weight Decay & 0.0005 \\
    EEGNet Learning Rate & 0.0005 \\
    EEGNet Weight Decay & 0.001 \\
    Optimizer & AdamW~\cite{loshchilov2019adamw} \\
    \bottomrule
    \end{tabular}
    \caption{Model hyperparameters used in our experiments.}
    \label{tab:hyperparameters}
\end{table}

\begin{table}[H]
    \centering
    \begin{tabular}{lc}
    \toprule
    \textbf{Parameter} & \textbf{Value} \\
    \midrule
    \textbf{Input MEG Data} & \\
    \# Gradiometer Channels & 102 \\
    \# Magnetometer Channels & 204 \\
    Original Sampling Rate & 1000 Hz \\
    \midrule    
    \textbf{Preprocessing} & \\
Downsampled rate & 250 Hz \\
High-Pass Filter & 0.1 Hz \\
Low-Pass Filter & 120 Hz \\
Channel-wise Standardization & IQR = [-1, 1] \\
    \midrule
     \textbf{Data Splits} & \\
    Train : Val : Test & 10 : 1 : 1 \\
    \bottomrule
    \end{tabular}
    \caption{Preprocessing and data splits used on MegNIST data.  We apply channel-wise robust scaling to account for different magnitudes of gradiometer (T/m) and magnetometer (T) channels.}
    \label{tab:preprocessing}
\end{table}

\begin{table}[H]
    \centering
    \begin{tabular}{lc}
    \toprule
    \textbf{Parameter} & \textbf{Value} \\
    \midrule
    \textbf{Baseline Augmentation} & \\
    White Noise std & 0.1 \\
    Smooth Time Mask Length & 50 \\
    Frequency Shift & 0.5 Hz \\
    Temporal Shift & 1 step \\
    Amplitude Scaling & uniformly drawn from [0.9, 1.1] \\
    \bottomrule
    \end{tabular}
    \caption{Baseline augmentation hyperparameters, tuned to optimize single-trial validation set performance.}
    \label{tab:baseline-augmentation-hyperparameters}
\end{table}

\section{Additional Experimental Results}
\label{app: full_pna_results}

\begin{figure*}[h!t]
    \centering
    \includegraphics[width=\linewidth]{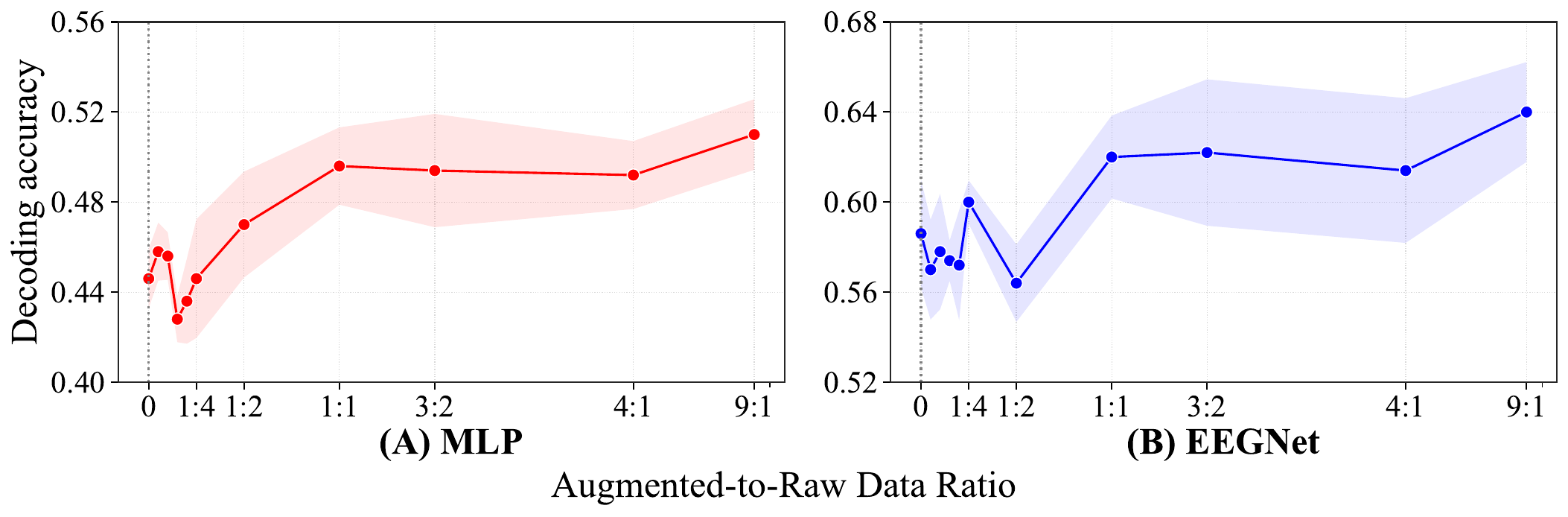}
    \caption{\textbf{Decoding accuracy scales with PNA samples (averaging without resampling).} Models are trained on a fixed budget of $10k$ raw trials ($1k$ post-averaging); the $x$-axis represents the ratio of augmented-to-raw data, ranging from 0 (baseline, vertical dotted line) to $90k$ ($9:1$ ratio; $9k$ post-averaging). Shaded regions denote $95\%$ confidence intervals ($n=5$ independent seeds); all runs 10-trial averaging.}
    \label{fig:averaging_without_resampling_results}
\end{figure*}

\begin{figure*}[h!t]
    \centering
    \includegraphics[width=\linewidth]{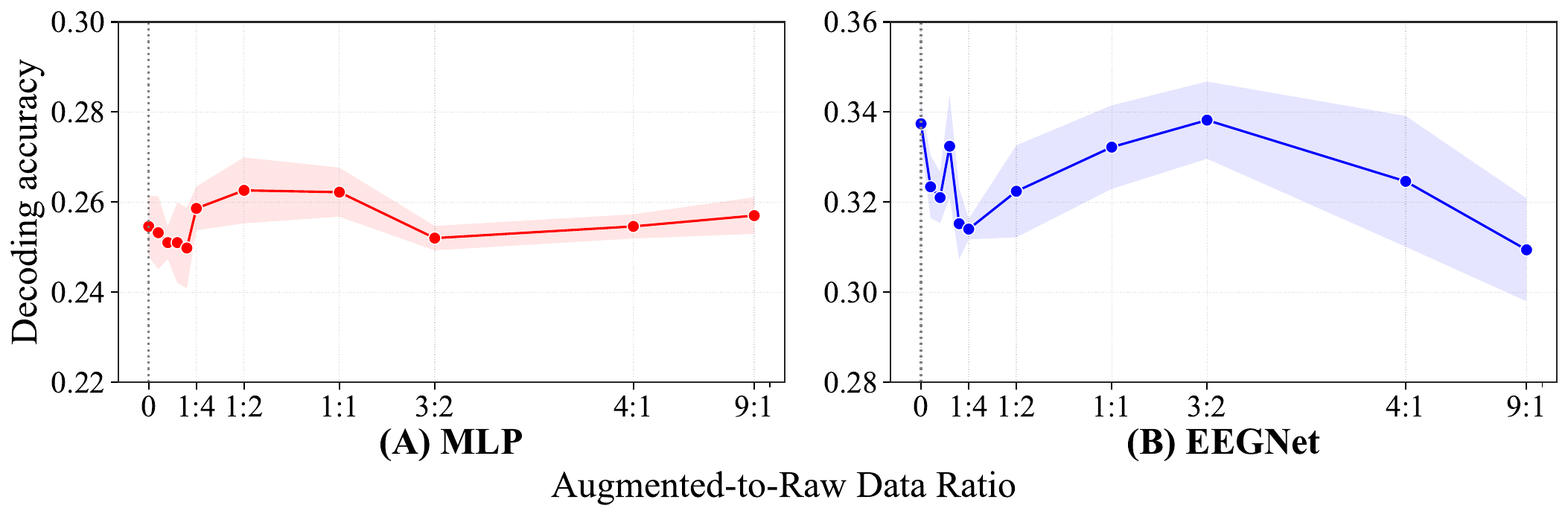}
    \caption{\textbf{Single-Trial Decoding accuracy under various PNA data infusions.} Models are trained on a fixed budget of $10k$ raw trials; the $x$-axis represents the ratio of augmented-to-raw data, ranging from 0 (baseline, vertical dotted line) to $90k$ ($9:1$ ratio). Shaded regions denote $95\%$ confidence intervals ($n=5$ independent seeds).}
    \label{fig:non_averaged_results}
\end{figure*}

For ``Raw + Aug.'', we concatenate varying amounts of augmented samples to 10,000 raw samples, to create final dataset sizes of 10,500, 11,000, 11,500, 12,000, 12,500, 15,000, 20,000, 25,000, 50,000 and 100,000.

\section{Compute Resources}
\label{app:compute-resources}

We used NVIDIA H100, L40S, RTX 8000, A100 and RTX A6000 GPUs with up to 96GiB of GPU memory and with up to 1TiB of RAM. We estimate that we used up to 5,000 hours of GPU compute over the course of this work, including final results (2,000 hours), hyperparameter searches (1,500 hours) and further experiments which were not included in the paper (500 hours).

\end{document}